\newtheorem{theorem}{Theorem}
\newtheorem{lemma}{Lemma}
\newcommand{\N}{\mathbb{N}}
\newcommand{\set}[1]{\left\{ #1 \right\}}
\renewcommand{\L}{\mathcal{L}}
\newcommand{\D}{\mathcal{D}}
\newcommand{\E}{\mathbb{E}}
\newcommand{\heap}{\textsc{Heap}}
\renewcommand{\succ}{\textsc{Succ}}
\newcommand{\seen}{\textsc{Seen}}
\newcommand{\query}{\textbf{Query}}
\begin{document}

\title{Scaling Neural Program Synthesis with Distribution-based Search}

\author[1,2]{Nathana{\"e}l Fijalkow}
\author[1]{Guillaume Lagarde}
\author[1]{Th{\'e}o Matricon}
\author[3]{Kevin Ellis}
\author[4]{Pierre Ohlmann}
\author[5]{Akarsh Potta}
\affil[1]{CNRS, LaBRI and Universit{\'e} de Bordeaux, France}
\affil[2]{The Alan Turing Institute of data science, United Kingdom}
\affil[3]{Cornell University, United States}
\affil[4]{University of Paris, France}
\affil[5]{Indian Institute of Technology Bombay, India}
\date{}

\maketitle

\begin{abstract}
We consider the problem of automatically constructing computer programs from input-output examples.
We investigate how to augment probabilistic and neural program synthesis methods with new search algorithms,
proposing a framework called distribution-based search.
Within this framework, we introduce two new search algorithms:
\textsc{Heap Search}, an enumerative method, and \textsc{SQRT Sampling}, a probabilistic method.
We prove certain optimality guarantees for both methods,
show how they integrate with probabilistic and neural techniques,
and demonstrate how they can operate at scale across parallel compute environments.
Collectively these findings offer theoretical and applied studies of search algorithms for program synthesis that integrate with recent developments in machine-learned program synthesizers.
\end{abstract}

\section{Introduction}

Writing software is tedious, error-prone, and accessible only to a small share of the population
-- yet coding grows increasingly important as the digital world plays larger and larger roles in peoples' lives.
Program synthesis seeks to make coding more reliable and accessible by developing methods for automatically constructing programs~\cite{Gulwani2017}.
For example, the FlashFill system~\cite{Gulwani2017} in Microsoft Excel makes coding more accessible by allowing nontechnical users to synthesize spreadsheet programs by giving input-output examples,
while the TF-coder system~\cite{TFCoder} seeks to make coding neural networks more reliable by synthesizing TensorFlow code from input-outputs.
Where these systems have been most successful is when they pair a specialized 
\emph{domain-specific language} (DSL) to a domain-specific search algorithm for synthesizing programs in that DSL.
A recent trend -- both in industry~\cite{Kalyan2018} and academia~\cite{Devlin2017} -- is to employ machine learning methods to learn to quickly search for a program in the DSL~\cite{Balog2017,Devlin2017,Lee2018,Zhang2018,Polosukhin2018,Kalyan2018,ZoharW18,chen2018execution}.
Many such recent works have explored engineering better neural networks for guiding program search, effectively by training the network to act as a language model over source code that conditions on input-outputs~\cite{Polosukhin2018}.
Here, we `pop up' a level and instead ask: given a neural net that probabilistically generates source code, how can we most efficiently deploy that model in order to find a program consistent with some input-outputs?
This concern arises because program synthesis requires solving a hard combinatorial search problem (exploring a possibly infinite space of programs),
so taming this combinatorial explosion makes the difference between a practically useful system, and a system which cannot scale to anything but the most trivial of programs.

At a high-level the approaches we develop in this work follow a 2-stage pipeline: in the first stage a learned model predicts probabilistic weights,
and in the second stage a symbolic search algorithm uses those weights to explore the space of source code.
Our contributions target the second stage of this pipeline,
and we focus on theoretical analysis of sampling-based search algorithms, new search algorithms based on neurally-informed enumeration, and empirical evaluations showing that recent neural program synthesizers can compose well with our methods.

This 2-stage pipelined approach has several benefits. The first is that the cost of querying the neural network is usually very small compared to the cost of combinatorial search, yet in practice the neural model learns to provide rough-and-ready probabilistic predictions to guide the search. A second benefit is that even if the probabilistic predictions are inaccurate, our methods maintain soundness and completeness (but may take longer to run). Another appeal is that it can be naturally combined with other classical approaches for program synthesis.

\paragraph*{Our contributions:}
\begin{itemize}
	\item A theoretical framework called distribution-based search for evaluating and comparing search algorithms
	in the context of machine-learned predictions. 
	\item Two new search algorithms: \textsc{Heap Search}, an enumerative method, and \textsc{SQRT Sampling}, a probabilistic method. We prove a number of theoretical results about them, in particular that they are both loss optimal.
	\item A method for running any search algorithm across parallel compute environments.
\end{itemize}
We perform an empirical evaluation of existing and new search algorithms, showing how the new methods integrate with probabilistic and neural techniques.

\begin{figure*}[t]
  \centering
  \includegraphics[width=\textwidth]{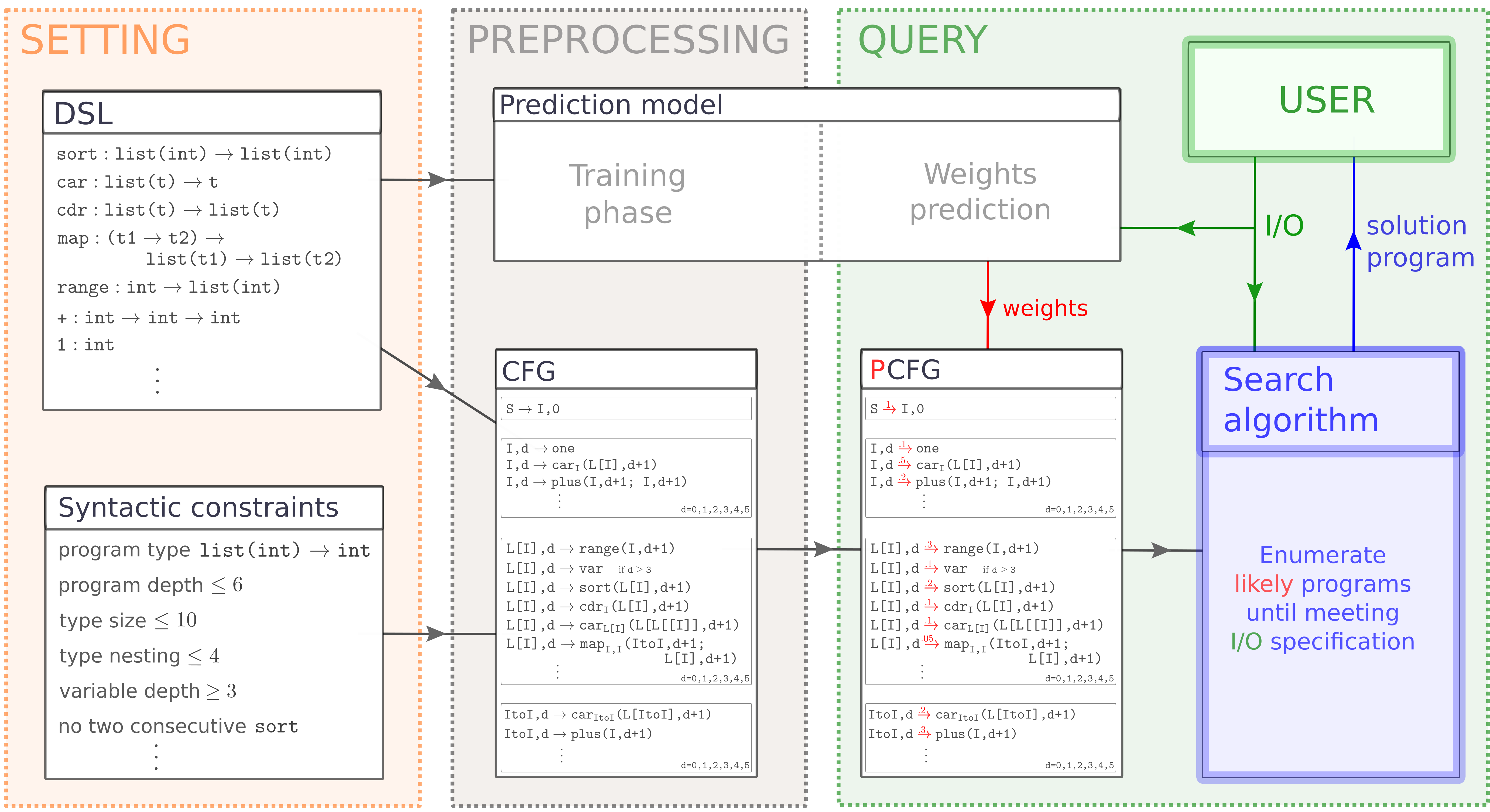}
  \caption{Pipeline for neural predictions for syntax guided program synthesis.}
  \label{fig:example_dsl_cfg}
\end{figure*}

\section{Distribution-based search}

We work within the syntax guided program synthesis (SyGuS) framework introduced by~\cite{AlurBJMRSSSTU13}, see also~\cite{AlurSFS18}. 
In this setting, the DSL is given by a set of primitives together with their (possibly polymorphic) types and semantics.

We describe the machine learning pipeline for program synthesis, illustrated in 
Figure~\ref{fig:example_dsl_cfg} on a toy DSL describing integer list manipulating programs.

The compilation phase constructs a context-free grammar (CFG) from the DSL together with a set of syntactic constraints.
The CFG may incorporate important information about the program being generated,
such as the $n$ last primitives (encompassing $n$-gram models) or semantic information (\textit{e.g.} non-zero integer, sorted list).

A prediction model (typically a neural network) takes as inputs a set of I/O and outputs a probabilistic labelling of the CFG, inducing a probabilistic context-free grammar (PCFG). 
The network is trained so that most likely programs (with respect to the PCFG) are the most likely to be solutions, meaning map the inputs to corresponding outputs.

We refer to Appendix~\ref{sec:appendix_framework} for an in-depth technical discussion on program representations
and on the compilation phase.
In this work we focus on the search phase and start with defining a theoretical framework for analysing search algorithms.

\vskip1em
The PCFG obtained through the predictions of the neural network defines a probabilistic distribution $\D$ over programs.
We make the theoretical assumption that the program we are looking for is actually sampled from $\D$,
and construct algorithms searching through programs which find programs sampled from $\D$ as quickly as possible.
Formally, the goal is to minimise the expected number of programs the algorithm outputs before finding the right program.

We write $A(n)$ for the $n$\textsuperscript{th} program chosen by the algorithm $A$; since $A$ may be a randomised algorithm $A(n)$ is a random variable.
The performance $\L(A,\D)$ of the algorithm $A$, which we call its loss, is the expected number of tries it makes before finding $x$:
\[
\L(A,\D) = \E_{x \sim \D} \left[ \inf \set{n \in \N : A(n) = x} \right].
\]
An algorithm $A^*$ is `loss optimal' if $\L(A^*,\D) = \inf_A \L(A,\D)$.
Let us state a simple fact: an algorithm is loss optimal 
if it generates each program once and in non-increasing order of probabilities.
Depending on $\D$ constructing an efficient loss optimal algorithm may be challenging,
pointing to a trade off between quantity and quality: 
is it worth outputting a lot of possibly unlikely programs quickly,
or rather invest more resources into outputting fewer but more likely programs?

\paragraph*{An example.}
To illustrate the definitions let us consider the distribution $\D$ over the natural numbers such that $\D(n) = \frac{1}{2^{n+1}}$; it is generated by the following PCFG:
\[
S \to^{.5} f(S) \quad ; \quad S \to^{.5} x,
\]
when identifying $n$ with the program $f^n(x)$.
Let us analyse a few algorithms.
\begin{itemize}
	\item The algorithm $A_1$ enumerates in a deterministic fashion the natural numbers starting from $0$: 
	$A_1(n) = n$.
	Then $\L(A_1,\D) = \sum_{n \ge 0} \frac{n+1}{2^{n+1}} = 2$.
	This enumeration algorithm $A_1$ is loss optimal.

	\item The algorithm~$A_2$ samples the natural numbers using the distribution~$\D$.
	For $n \ge 0$, the value of $\E \left[ \inf \set{n' : A_2(n') = n} \right]$ is $2^{n+1}$: 
	this is the expectation of the geometric distribution with parameter $\frac{1}{2^{n+1}}$.
	Then $\L(A_2,\D) = \sum_{n \ge 0} \frac{2^{n+1}}{2^{n+1}} = \infty$.
	Hence the naive sampling algorithm using $\D$ has infinite loss.

	\item The algorithm $A_3$ samples the natural numbers using a distribution that we call $\sqrt{\D}$
	defined\footnote{ For the normalisation factor, note that $\sum_{n \ge 0} \frac{1}{2^{\frac{n+1}{2}}} = 1 + \sqrt{2}$.} 
	by $\sqrt{\D}(n) = \frac{1}{1 + \sqrt{2}} \frac{1}{2^{\frac{n+1}{2}}}$.

	For $n \ge 0$, the value of $\E \left[ \inf \set{n' : A_3(n') = n} \right]$ is $(1 + \sqrt{2}) 2^{\frac{n+1}{2}}$: 
	this is the expectation of the geometric distribution with parameter $\frac{1}{1 + \sqrt{2}} \frac{1}{2^{\frac{n+1}{2}}}$.
	Then 
	\[
	\begin{array}{lll}
	\L(A_3,\D) & = & \sum_{n \ge 0} \frac{(1 + \sqrt{2}) 2^{\frac{n+1}{2}}}{2^{n+1}} \\
	 & = & (1 + \sqrt{2}) \sum_{n \ge 0} \frac{1}{2^{\frac{n+1}{2}}} \\
	 & = & (1 + \sqrt{2})^2 \approx 5.83.
	\end{array}
	\]
	As we will prove in a more general statement (Theorem~\ref{thm:optimal_sampling}), 
	the algorithm $A_3$ is loss optimal among sampling algorithms. 
	Suprisingly it is not much worse than the loss optimal algorithm, yet offers many advantages: 
	it is much easier to implement, and requires no memory at all. 
	Last but not least in the case of PCFG it can be implemented using a new probabilistic labelling of the PCFG inducing~$\D$.
\end{itemize}

The next two sections are devoted to the two natural classes of algorithms for distribution-based search: \emph{enumeration} and \emph{sampling}. We then study how they can run at scale accross parallel compute environments.

\section{Enumerative methods and \\ the \textsc{Heap Search} algorithm}

A number of enumerative methods have been investigated in previous works~\cite{Menon2013, Balog2017, Feng2018, ZoharW18}.
They proceed in a top-down fashion, and can be understood as ways of exploring the tree of leftmost derivations of the grammar
as illustrated in Figure~\ref{fig:enumerative_methods}.

\begin{figure}
  \centering
  \includegraphics[width=0.99\columnwidth]{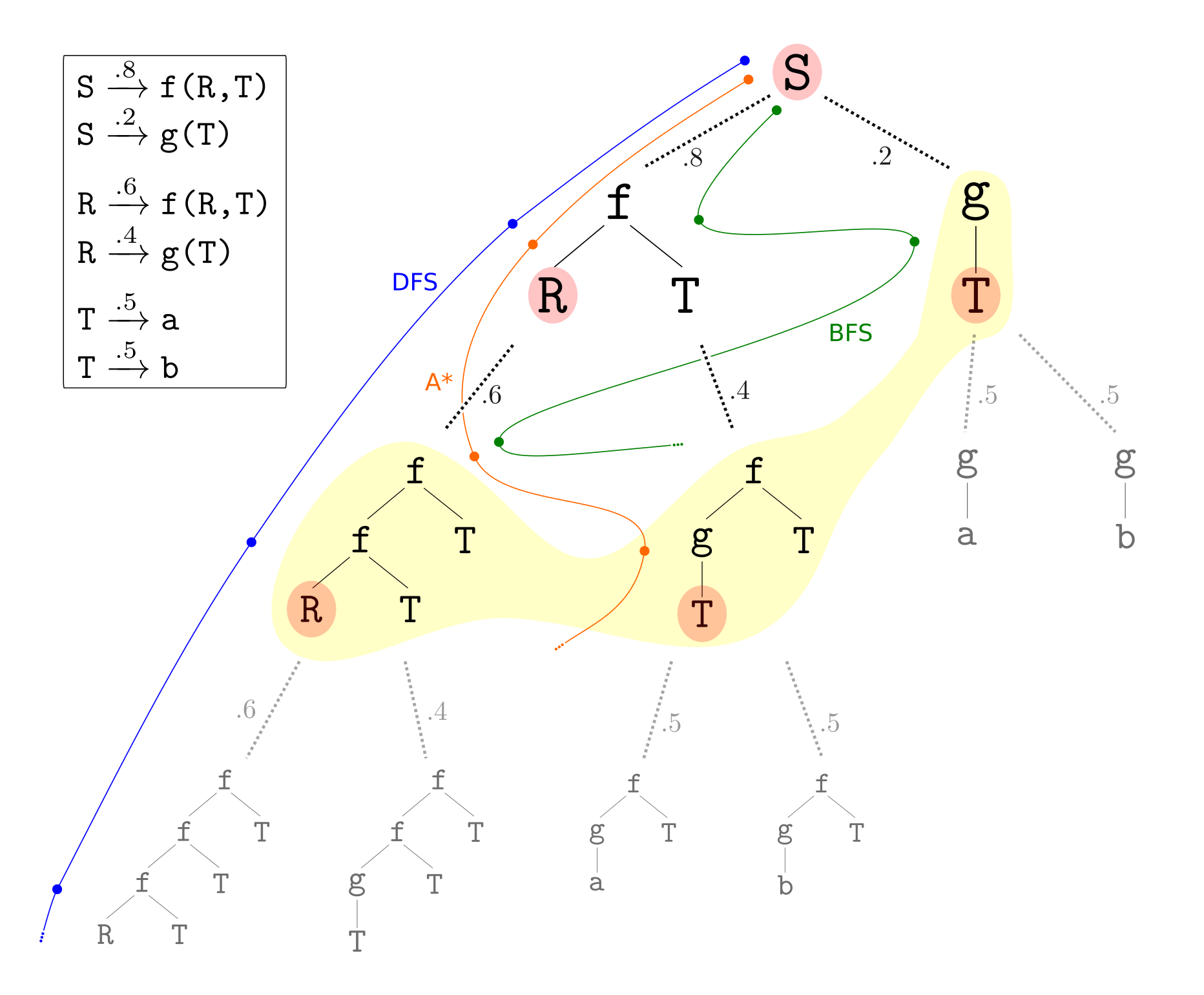}
  \caption{Illustration of the tree of leftmost derivations.}
  \label{fig:enumerative_methods}
\end{figure}

We present a new efficient and loss optimal algorithm called \textsc{Heap Search} and following a bottom-up approach.
It uses a data structure based on heaps to efficiently enumerate all programs in non-increasing order of the
probabilities in the PCFG.

Let us write $\D$ for the distribution induced by a PCFG.
For a program $x$, we say that $x'$ is the `successor of $x$' if it is the most likely program after $x$, 
meaning $\D(x) > \D(x')$ and there are no programs $x''$ such that $\D(x) > \D(x'') > \D(x')$.
For a non-terminal $T$ in the grammar, the `successor of $x$ from $T$' is the most likely program
after $x$ among those generated from $T$. We define `predecessor of
$x$' and `predecessor of $x$ from $T$' in a similar way.

We create a procedure $\query(T,x)$ which for any program $x$ generated from a non-terminal $T$ outputs the successor of $x$ from $T$. 
Note that once this is defined, the algorithm performs successive calls to $\query(S,x)$ with $S$ the initial non-terminal and $x$ the latest generated program, yielding all programs in non-increasing order of the probabilities.

To explain how $\query(T,x)$ works, we first define the data structure.
For each non-terminal $T$ we have a hash table $\succ_T$ which stores the successors of all previously seen programs generated from $T$, and a heap $\heap_T$ which contains candidate programs, ordered by non-increasing probability.
The key invariant is the following: 
the successor of $T$ from $x$ has either already been computed, hence is in $\succ_T$,
or is the maximum program in $\heap_T$.
This means that implementing $\query(T,x)$ is very simple: it checks whether the successor has already been computed and returns it in that case, and if not it pops the heap.
The difficulty is in maintaining the invariant; for this we need to add a number of candidate programs to the heaps.
They are obtained by substituting one argument from the returned successor, and propagate this recursively to the corresponding non-terminals. 

\begin{theorem}
\label{thm:heap_search}
The \textsc{Heap Search} algorithm is loss optimal: it enumerates every program exactly once and in non-increasing order of probabilities.
\end{theorem}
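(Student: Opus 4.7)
The plan is to establish the theorem by maintaining the key invariant already highlighted in the algorithm description: for every non-terminal $T$ and every program $x$ previously returned from $T$, the successor of $x$ from $T$ is either already stored in $\succ_T$ or is the maximum element of $\heap_T$. Once this invariant is shown to be preserved by every call to $\query$, loss optimality follows immediately: successive calls to $\query(S,\cdot)$ return all programs in non-increasing order of probability, and a simple bookkeeping check ensures no repetitions, matching the loss-optimal characterisation stated earlier in the paper.

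I would prove the invariant by induction on the number of calls to $\query$. For initialisation, each heap $\heap_T$ is seeded with the program $f(y_1^\star,\ldots,y_k^\star)$ for every rule $T \to f(T_1,\ldots,T_k)$, where $y_i^\star$ denotes the maximum-probability program from $T_i$ (obtained by recursively taking top-weighted rules). By multiplicativity of the PCFG weights, the top of $\heap_T$ is then the maximum-probability program derivable from $T$, and the invariant holds before any query. For the inductive step, consider $\query(T,x)$ whose answer $y = f(y_1,\ldots,y_k)$ is popped from $\heap_T$; I must show that after inserting, for each $i$, the candidate $w_i = f(y_1,\ldots,y_{i-1},y_i',y_{i+1},\ldots,y_k)$ with $y_i' = \query(T_i,y_i)$, the invariant is restored for the new latest-returned program $y$.

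The combinatorial crux, which is the main obstacle, is showing that these $k$ single-coordinate substitutions (together with the seeds for the other rules) really exhaust the candidates for the successor of $y$ from $T$. Given any $z = f(z_1,\ldots,z_k)$ that uses the same rule as $y$ and satisfies $\D(z) < \D(y)$, multiplicativity forces at least one coordinate $i$ with $\D(z_i) < \D(y_i)$, hence $\D(z_i) \le \D(y_i')$ because $y_i'$ is the immediate successor of $y_i$ in the enumeration from $T_i$. The candidate $w_i$ then satisfies $\D(w_i) \ge \D(z)$, with equality exactly when $z = w_i$. Consequently every program below $y$ using the same rule is dominated by some $w_i$, so the true successor of $y$ from $T$ is either one of the $w_i$ or, if it uses a different rule, one of the initial seeds still sitting in $\heap_T$.

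Finally, I would close with the bookkeeping needed for uniqueness and termination: consulting a $\seen_T$ set before pushing into $\heap_T$ avoids inserting a program more than once, so each program is returned exactly once across the top-level enumeration; and termination of the recursive calls $\query(T_i,y_i)$ would be justified by a well-founded ordering based on the probability-rank of the requested successor within its non-terminal. Combining the invariant with these two checks yields that \textsc{Heap Search} enumerates every program exactly once in non-increasing order of probability, which by the earlier characterisation is exactly loss optimality.
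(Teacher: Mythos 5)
Your overall architecture (induction on queries, preservation of the stated invariant, $\seen_T$ for uniqueness) is sound and is essentially the contrapositive of the paper's ``first failure'' argument. But the combinatorial crux of your inductive step is wrong as stated. You claim that for any $z=f(z_1,\dots,z_k)$ using the same rule as $y$ with $\D(z)<\D(y)$, picking a coordinate $i$ with $\D(z_i)<\D(y_i)$ yields $\D(w_i)\ge\D(z)$ for $w_i=f(y_1,\dots,y_i',\dots,y_k)$. This needs $\prod_{j\ne i}\D(z_j)\le\prod_{j\ne i}\D(y_j)$, which does not follow: the other coordinates of $z$ may have \emph{higher} probability than those of $y$. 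Concretely, take arity $2$ with rule probability $1$, programs from $T_1$ of probabilities $0.5,\,0.3,\,0.1,\,0.05,\,0.02$ and from $T_2$ of probabilities $0.9,\,0.1$. For $y=f(u_2,v_2)$ (probability $0.03$) and $z=f(u_5,v_1)$ (probability $0.018<0.03$), the only coordinate where $z$ is smaller is the first, yet $w_1=f(u_3,v_2)$ has probability $0.01<0.018$. So $z$ is below $y$ but dominated by no $w_i$ and is not a seed; your conclusion that ``the true successor of $y$ is one of the $w_i$ or an initial seed'' is therefore unjustified (and indeed the successor of a popped program need not be pushed during the processing of that program).

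The paper closes this gap by arguing in the opposite direction. Let $s$ be the true successor of $y$ from $T$. If some component $x_i$ of $s=f(x_1,\dots,x_k)$ has a \emph{predecessor} $x_i'$ from $T_i$, then $f(x_1,\dots,x_i',\dots,x_k)$ has probability strictly greater than $\D(s)$, hence at least $\D(y)$ (there is nothing strictly between $\D(s)$ and $\D(y)$), hence was already returned by an earlier query on $T$; at that moment the algorithm called $\query(T_i,x_i')=x_i$ and pushed $s$ onto $\heap_T$. If no component has a predecessor, $s$ is a seed. Either way $s$ is already in $\heap_T$ (or in $\succ_T$) when needed, and the first-case argument (a program cannot be popped twice because $\seen_T$ prevents double pushes) rules out popping something too large. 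In my example this is exactly what happens: $z=f(u_5,v_1)$ was pushed when $f(u_4,v_1)$ (probability $0.045$) was enumerated, well before $y$. You should replace your ``domination by the $w_i$'' step with this predecessor-based argument; the rest of your proof then goes through.
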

We refer to Appendix~\ref{sec:appendix_heap_search} for a complete description of the algorithm with a pseudocode,
a proof of Theorem~\ref{thm:heap_search}, and a computational complexity analysis.

\textsc{Heap Search} is related to $A^*$ from~\cite{Feng2018}: they are both loss optimal algorithms,
meaning that they both enumerate programs in non-increasing order of probabilities.
As we will see in the experiments \textsc{Heap Search} is better in two aspects:
it is faster, and it is bottom-up, implying that program evaluations can be computed along with the programs and avoiding evaluating twice the same (sub)program.

\section{Sampling methods and\\ the \textsc{SQRT Sampling} algorithm}

A \emph{sampling algorithm} takes random samples from a distribution $\D'$; what is both a strength and a weakness is that a sampling algorithm is memoryless: a weakness because the algorithm does not remember the previous draws, which means that it may draw them again, but also a strength because it uses very little space and can be very easily implemented.

In the case of sampling algorithms,  we identify algorithms with distributions.
The following theorem shows a dichotomy: 
either there exists a loss optimal sampling algorithm among sampling algorithms,
and then it is characterised as the `square root' of the distribution,
or all sampling algorithms have infinite loss.

\begin{theorem}\label{thm:optimal_sampling}
Let $\D$ a distribution over a set $X$.
If $\sum_{x \in X} \sqrt{\D(x)} < \infty$, the distribution $\sqrt{\D}$ defined by 
\[
\sqrt{\D}(x) = \frac{\sqrt{\D(x)}}{\sum_{y \in X} \sqrt{\D(y)}}
\]
is loss optimal among all sampling algorithms.
If $\sum_{x \in X} \sqrt{\D(x)} = \infty$, for all sampling algorithms $\D'$ we have $\L(\D',\D) = \infty$.
\end{theorem}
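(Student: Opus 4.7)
The plan is to reduce the theorem to a single application of the Cauchy--Schwarz inequality. First I would observe that, because a sampling algorithm with distribution $\D'$ produces i.i.d.\ draws, for each target $x$ in the support of $\D$ the hitting time $\inf\{n : A(n) = x\}$ is geometric with parameter $\D'(x)$, hence has expectation $1/\D'(x)$, with the convention that this expectation is $+\infty$ whenever $\D'(x) = 0$. Integrating over $x \sim \D$ and using linearity of expectation yields the closed form
\[
\L(\D',\D) \;=\; \sum_{x \in X} \frac{\D(x)}{\D'(x)},
\]
where any term with $\D(x) > 0 = \D'(x)$ is understood as $+\infty$. In particular one may restrict to $\D'$ whose support contains that of $\D$, since otherwise the loss is trivially infinite.

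Next I would apply Cauchy--Schwarz to the factorisation $\sqrt{\D(x)} = \sqrt{\D'(x)} \cdot \sqrt{\D(x)/\D'(x)}$, obtaining
\[
\left(\sum_{x \in X} \sqrt{\D(x)}\right)^{2} \;\leq\; \left(\sum_{x \in X} \D'(x)\right) \cdot \left(\sum_{x \in X} \frac{\D(x)}{\D'(x)}\right) \;=\; \L(\D',\D).
\]
This one inequality handles both halves of the dichotomy at once. In the divergent case $\sum_x \sqrt{\D(x)} = \infty$, the left-hand side is infinite, forcing $\L(\D',\D) = \infty$ for every sampling distribution $\D'$. In the convergent case, it gives the uniform lower bound $\L(\D',\D) \geq \bigl(\sum_x \sqrt{\D(x)}\bigr)^{2}$ on any sampling algorithm.

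To finish the convergent case I would use the equality criterion in Cauchy--Schwarz: equality holds iff the two sequences being paired are proportional, which forces $\D'(x) \propto \sqrt{\D(x)}$, pinning the optimal sampler down to $\sqrt{\D}$ after normalisation. A direct computation then confirms the bound is attained:
\[
\L(\sqrt{\D},\D) \;=\; \sum_{x \in X} \frac{\D(x)\,\sum_{y \in X} \sqrt{\D(y)}}{\sqrt{\D(x)}} \;=\; \left(\sum_{x \in X} \sqrt{\D(x)}\right)^{2}.
\]

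There is no serious obstacle here; the argument is essentially a one-line application of Cauchy--Schwarz once the loss is rewritten as $\sum_x \D(x)/\D'(x)$. The only care points are justifying the reduction from a sampling algorithm to its underlying distribution (already set up in the preceding section) and cleanly handling the case where $\D'$ vanishes somewhere on the support of $\D$, which the $1/\D'(x) = \infty$ convention absorbs without further work.
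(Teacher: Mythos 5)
Your proposal is correct and follows essentially the same route as the paper's proof: derive $\L(\D',\D) = \sum_{x} \D(x)/\D'(x)$ from the geometric hitting-time expectation, then apply Cauchy--Schwarz with the factorisation $\sqrt{\D(x)} = \sqrt{\D(x)/\D'(x)}\cdot\sqrt{\D'(x)}$ to get the lower bound $\bigl(\sum_x \sqrt{\D(x)}\bigr)^2$, which is attained by $\sqrt{\D}$ and which also settles the divergent case. Your extra care about $\D'(x)=0$ and the equality criterion are welcome refinements but do not change the argument.
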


\begin{proof}
Let $\D'$ be a distribution. 
For an element $x$, the expectation of the number of tries for $\D'$ to draw $x$ is $\frac{1}{\D'(x)}$:
this is the expectation of success for the geometric distribution with parameter $\D'(x)$.
It follows that
\[
\L(\D',\D) = \E_{x \sim D} \left[ \frac{1}{\D'(x)} \right] = \sum_{x \in X} \frac{\D(x)}{\D'(x)}.
\]

Let us assume that $\sum_{x \in X} \sqrt{\D(x)} < \infty$.
Thanks to Cauchy-Schwarz inequality we have:
\[
\begin{array}{lll}
\left( \sum_{x \in X} \sqrt{\D(x)} \right)^2 
& = & \left( \sum_{x \in X} \sqrt{\frac{\D(x)}{\D'(x)}} \sqrt{\D'(x)} \right)^2 \\
& \le & \left( \sum_{x \in X} \frac{\D(x)}{\D'(x)} \right) \cdot \underbrace{\left( \sum_{x \in X} \D'(x) \right)}_{= 1} \\
& = & \sum_{x \in X} \frac{\D(x)}{\D'(x)}.
\end{array}
\]
We note that $\L(\sqrt{\D},\D) = \left( \sum_{x \in X} \sqrt{\D(x)} \right)^2$,
so the previous inequality reads $\L(\D',\D) \ge \L(\sqrt{\D},\D)$. 
Thus $\sqrt{\D}$ is loss optimal among sampling algorithms,
and if it is not defined, then for any $\D'$ we have $\L(\D',\D) = \infty$.
\end{proof}

Theorem~\ref{thm:optimal_sampling} characterises the loss optimal sampling algorithm, but does not explain how to implement it.
The following result answers that question.

\begin{theorem}\label{thm:optimal_sampling_pcfg}
If $\D$ is defined by a PCFG and $\sqrt{\D}$ is well defined,
then we can effectively construct a PCFG defining~$\sqrt{\D}$.
\end{theorem}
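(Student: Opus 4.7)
The strategy is to construct the target PCFG by reweighting each rule of the given PCFG using certain partition functions attached to non-terminals. For each non-terminal $T$, let $\D_T$ denote the distribution on programs derivable from $T$, and set $Z_T = \sum_x \sqrt{\D_T(x)}$, so that $\D = \D_S$ and $Z_S < \infty$ by hypothesis. Partitioning derivations by the first rule applied yields the fixed-point identity
\[
Z_T \;=\; \sum_{T \to^p \alpha} \sqrt{p}\, \prod_{T' \in \alpha} Z_{T'},
\]
where the inner product ranges over the non-terminals occurring in the right-hand side $\alpha$. After discarding useless non-terminals (those not reachable with positive probability from $S$), finiteness of $Z_S$ propagates to every $Z_T$, since any non-terminal reachable with positive probability contributes a positive multiple of $Z_T$ to $Z_S$.

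Given the $Z_T$'s, I would define a new PCFG by replacing each rule $T \to^{p} \alpha$ by $T \to^{q} \alpha$ with $q = \sqrt{p}\, \prod_{T' \in \alpha} Z_{T'} / Z_T$. Two checks then suffice: that these weights form a valid PCFG, and that the induced distribution equals $\sqrt{\D}$. The first is immediate from the fixed-point identity, which says precisely that $\sum q = 1$ for each fixed left-hand side $T$. For the second, I would observe that the new probability of a derivation tree for $x$ is the product of the $q$'s over all rules used; the $\sqrt{p}$ factors multiply to $\sqrt{\D(x)}$, while the $Z$ factors telescope, each non-root non-terminal appearing exactly once in a numerator (as a child in its parent rule) and once in a denominator (as the LHS of its own rule), leaving only $1/Z_S$. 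Thus each derivation tree has new probability $\sqrt{\D(x)}/Z_S$, which sums over derivations of $x$ to $\sqrt{\D}(x)$ as required.

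The main obstacle is the \emph{effective} aspect: the $Z_T$'s are defined by an infinite sum and the fixed-point system above is polynomial with generally no closed form. I would handle this by arguing that $(Z_T)_T$ is the least non-negative fixed point of a monotone polynomial operator, obtained as the pointwise limit of iterating the operator from $0$ (equivalently, as the limit of partition functions restricted to derivations of bounded depth). This yields an effective procedure producing the new rule weights to arbitrary precision, which is what is needed to implement the sampler of Theorem~\ref{thm:optimal_sampling}; exact construction is possible in favourable cases, for example when the resulting system is linear or when the grammar is strongly connected with algebraic rule probabilities.
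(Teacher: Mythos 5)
Your construction is exactly the paper's: take the square root of each transition probability to obtain a weighted CFG, then renormalise it into a PCFG via the per-non-terminal partition functions $Z_T$ --- this is precisely the normalisation procedure the paper delegates to Chi (1999), and your fixed-point identity, row-sum check, and telescoping argument are the standard verification of that step. The only divergence is on effectivity, where the paper sidesteps the issue by restricting to acyclic PCFGs (so the $Z_T$ are computed exactly by a finite bottom-up recursion), while you treat the general case via least-fixed-point iteration of the monotone polynomial system, which is a sound and slightly more general resolution.
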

The PCFG for $\sqrt{\D}$ is obtained from the PCFG for $\D$ by taking the square root of each transition probability,
and then globally renormalising. 
Details of this procedure can be found in Appendix~\ref{sec:appendix_SQRT_sampling}.

\section{Parallel implementations}

Harnessing parallel compute environments is necessary for scalable, future-proof search algorithms,
because combinatorial search bottlenecks on compute,
and both the present and likely future of massive compute is a parallel one.
Accordingly, we have taken care to design and evaluate extensions of our algorithms which can metabolize these compute resources through multiprocessing.

We introduce a new algorithm called the \emph{grammar splitter}, which partitions a PCFG into a balanced family of $k$ sub-PCFGs.
Each of the $k$ threads is assigned a sub-PCFG and simply runs a search algorithm on it.
Two key advantages of our approach are that any search algorithm can be used in this very simple parallel architecture,
and that the theoretical gain of using $k$ threads is linear in $k$.
The output of the grammar splitter is illustrated in Figure~\ref{fig:grammar_splitter}: the white PCFG is split into $4$ sub-PCFGs.

\begin{figure}[t]
  \centering
  \includegraphics[width=0.99\columnwidth]{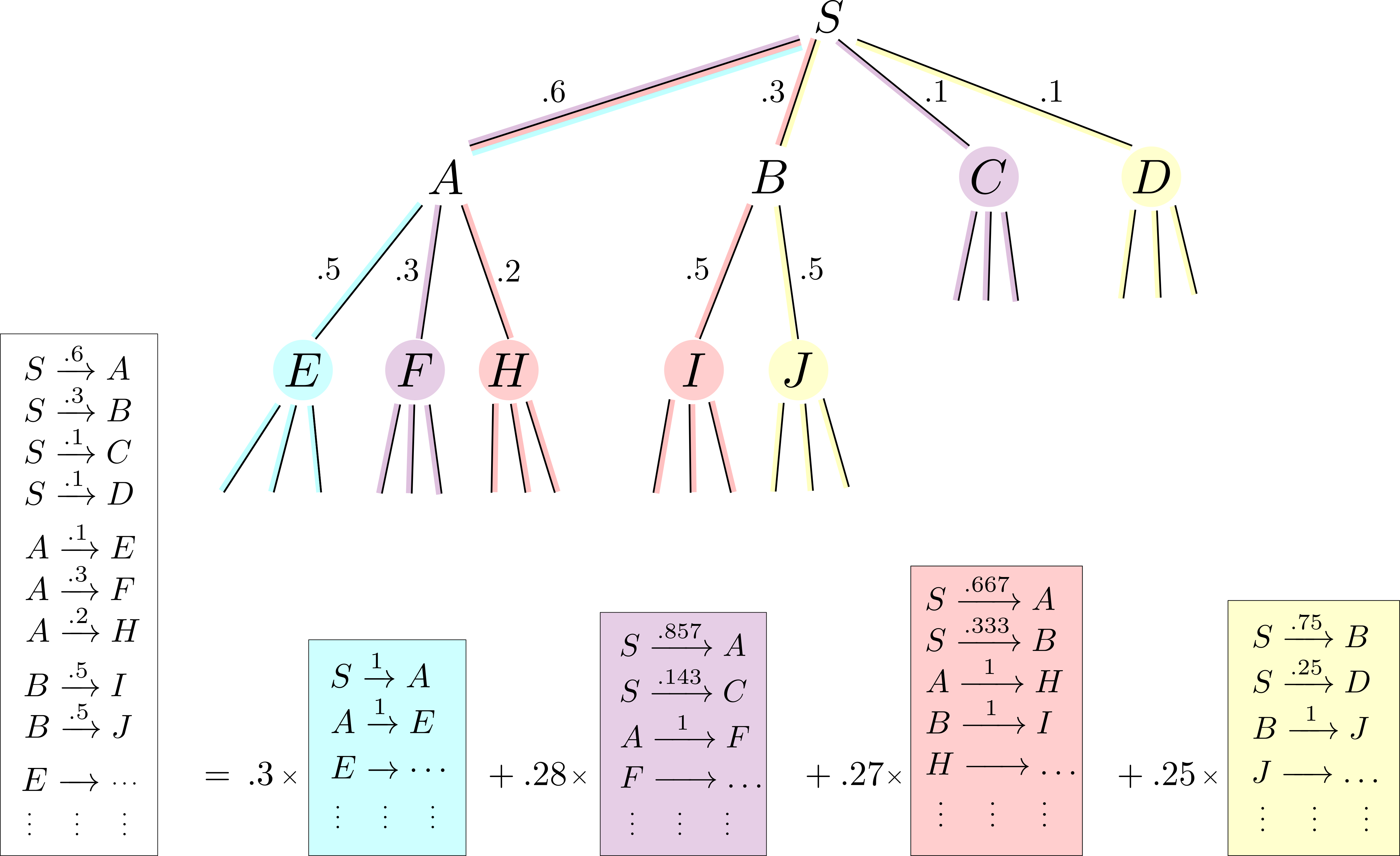}
  \caption{The grammar splitter: a balanced partition with quality $\alpha = \frac{.3}{.25} = 1.2$.}
  \label{fig:grammar_splitter}
\end{figure}

The two crucial properties of the grammar splitter are:
\begin{itemize}
	\item the space of programs is partitioned into $k$ subspaces. 
	This implies that the threads do not carry out redundant work and that all programs are generated,
	\item the $k$ program subspaces are balanced, meaning that their mass probabilities are (approximately) equal. 
	This implies that all threads contribute equally to the search effort.
\end{itemize}

A split is a collection of partial programs, for instance \verb+map (* 2) HOLE+ and \verb!fold + HOLE HOLE!,
it induces a PCFG.
A set of $k$ incomparable splits yields a partition of the PCFG.
Let us write $\alpha$ for the quality of a partition, defined as the ratio between the maximum and the minimum probability mass of a split.
We are looking for a balanced partition, \textit{i.e.} one for which the quality $\alpha$ is close to $1$.

Our algorithm finds a balanced partition through a hill climbing process:
at each point the algorithm either looks for an improving swap or a refinement.
In the first case, the action of an improving swap is to transfer a partial program from one split to another,
and its goal is to lower the quality coefficient.
In the second case, we consider the partial program with maximal probability in a split
and refine it: for example \verb+map (* 2) HOLE+ could be replaced by 
\verb+map (* 2) var0+ and \verb+map (* 2) (filter HOLE HOLE)+.

\begin{figure*}
    \centering
    \subfloat[\centering Cumulative probability against time in log-scale]{{\includegraphics[scale=0.8]{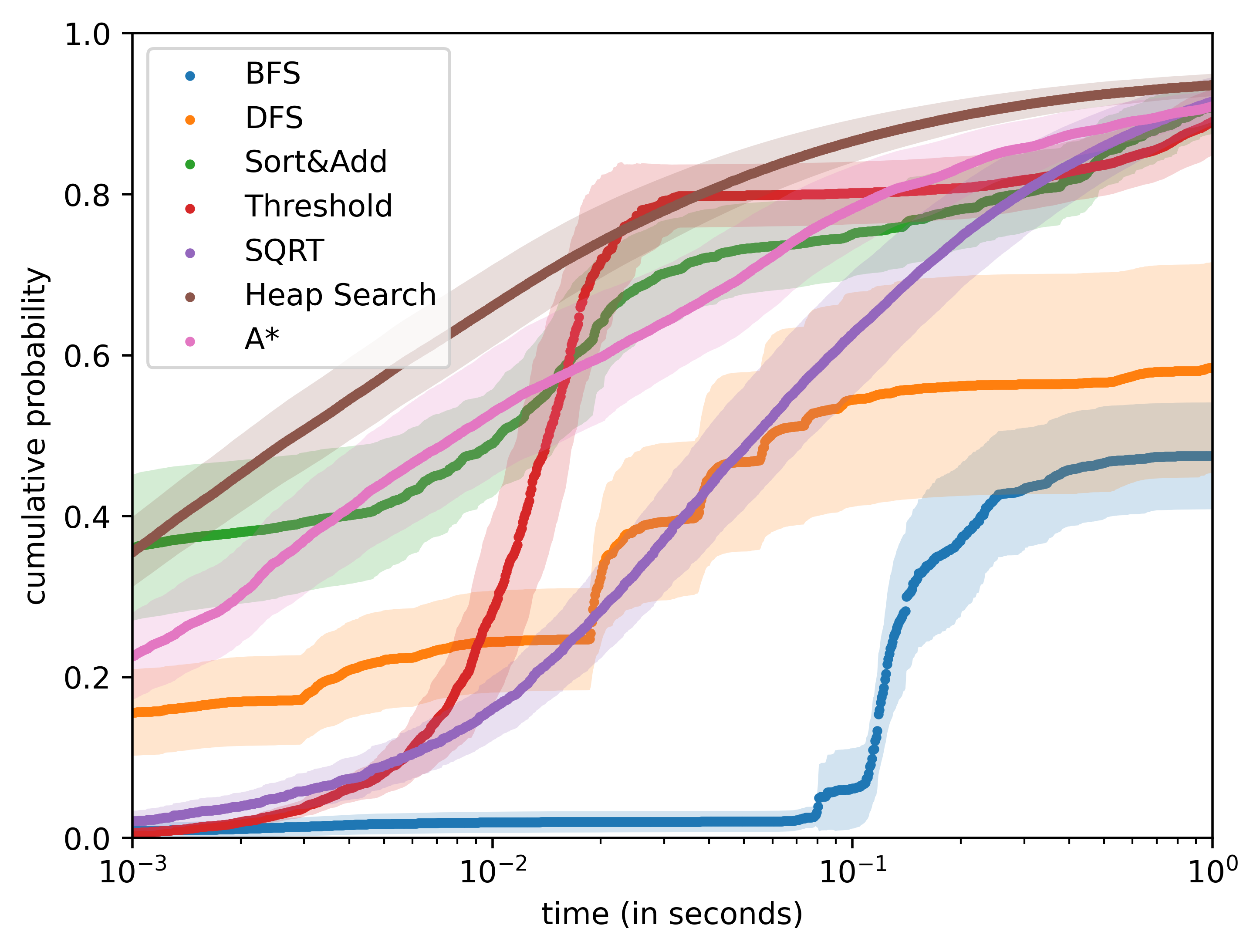}}}
    \qquad
    \subfloat[\centering Cumulative probability against number of programs output in log-scale]{{
\includegraphics[scale=0.8]{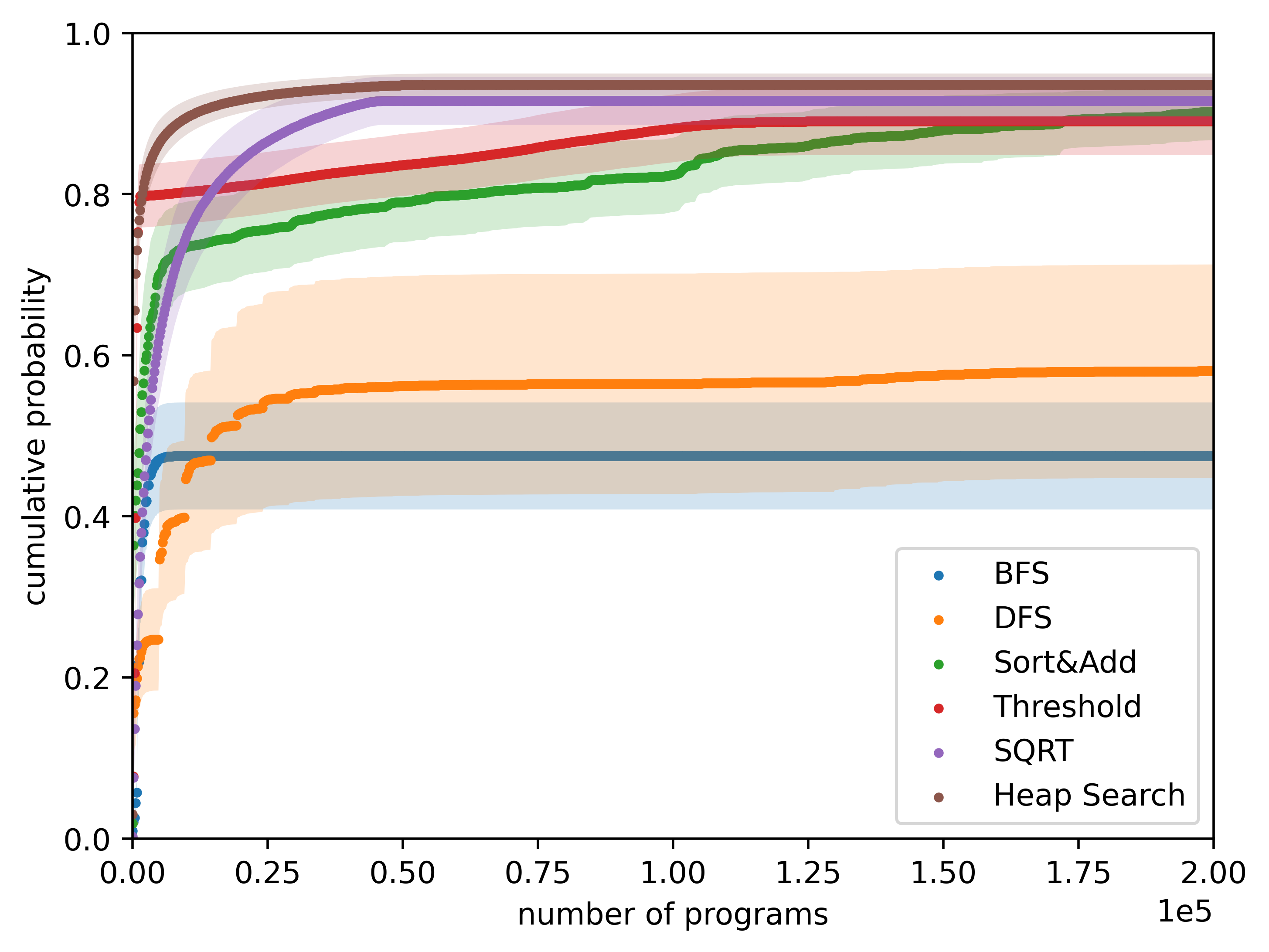}}}
    \caption{Comparing all search algorithms on random PCFGs}
    \label{fig:search_randomPCFGs}
\end{figure*}

\section{Experiments}

We study a range of search algorithms -- both our new ones and prior work -- across list processing and string manipulation domains, with the goal of answering the following questions:
\begin{itemize}
\item \textsc{Heap Search} and $A^*$ are both loss optimal enumerative algorithms; beyond these theoretical guarantees, how do the two algorithms compare in practice?
\item How effective are our search algorithms for solving complex program synthesis benchmarks using neural guidance?
\item How do our algorithms scale with parallel compute? 
\end{itemize}

We use a generic program synthesizer written from scratch in Python
(Appendix~\ref{sec:appendix_framework}),
studying random PCFGs (more controlled) and machine-learned PCFGs (more naturalistic).

We report results on DSLs from DeepCoder~\cite{Balog2017} and DreamCoder~\cite{EllisWNSMHCST21}.
Both target the classic program synthesis challenge of integer list processing programs, but with different properties.
DeepCoder's DSL is larger and more specialized, with around $40$ high-level primitives,  and does not use polymorphic types,
while DreamCoder's is smaller and more generic, with basic functional programming primitives such as map, fold, unfold, car, cons, and cdr, etc., for a total of around $20$ primitives.
Both DSLs are compiled into a CFG with minimal syntactic constraints generating programs of depth $6$.

The search algorithms under consideration are:
\begin{itemize}
\item \textsc{Threshold} from~\cite{Menon2013}: iterative-deepening-search, where the threshold that is iteratively deepened is a bound on program description length (i.e. negative log probability),
\item \textsc{Sort and add} from~\cite{Balog2017}: an inner loop of depth-first-search, with an outer loop that sorts productions by probability and runs depth-first-search with the top $k$ productions for increasing values of $k$,
\item $A^*$ from~\cite{Feng2018}: best-first-search on the graph of (log probabilities of) tree derivations,
\item \textsc{Beam search} from~\cite{Zhang2018}: breadth-first-search with bounded width that is iteratively increased.
\end{itemize}
As well as our new algorithms: \textsc{Heap Search} and \textsc{SQRT Sampling}.
We refer to Appendix~\ref{sec:appendix_enumerative} for a description of the algorithms and their implementations.

Our implementation of \textsc{SQRT Sampling} uses the Alias method~\cite{VOSE},
which is an efficient data structure for sampling from a categorical distribution.
We associate to each non-terminal an Alias table, reducing the task of sampling a derivation rule with $n$ choices
to sampling uniformly in $[1,n]$ and in $[0,1]$.

All algorithms have been reimplemented and optimised in the codebase
to provide a fair and uniform comparison.
We also report on parallel implementations using our grammar splitter.

\subsection{Random PCFGs}

In this first set of experiments we run all search algorithms on random PCFGs until a timeout,
and compare the number of programs they output and the cumulative probability of all programs output.

To obtain random PCFGs from the CFGs we sample a probabilistic labeling with an exponential decrease (this is justified by the fact that machine-learned PCFGs feature exponential decrease in transition probabilities).
In this experiment the initialization cost of each algorithm is ignored.
The results presented here are averaged over $50$ samples of random PCFGs, the solid lines represent the average
and a lighter color indicates the standard deviation.
Details on the sampling procedure can be found in Appendix~\ref{sec:appendix_experiments}.
 
Figure~\ref{fig:search_randomPCFGs} shows the results for all algorithms in a non-parallel implementation.
On the lhs we see that \textsc{Heap Search} (almost always) has the highest cumulative probability against both time and number of distinct programs.
Note that since $A^*$ and \textsc{Heap Search} enumerate the same programs in the same order they produce the same curve in the rhs of Figure~\ref{fig:search_randomPCFGs} so we did not include $A^*$. 

To compare $A^*$ and \textsc{Heap Search} we refer to Figure~\ref{fig:comparison_heap_search_a_star},
showing that \textsc{Heap Search} generates $2.35$ times more programs than $A^*$, consistently over time.
The larger variations for $A^*$ are due to the manipulation of a single heap of growing size, requiring frequent memory reallocations.
The difference in performance can be explained by the fact that $A^*$ uses a single heap for storing past computations,
while \textsc{Heap Search} distributes this information in a family of connected heaps and hash tables.

\begin{figure}
  \centering
  \includegraphics[width=0.99\columnwidth]{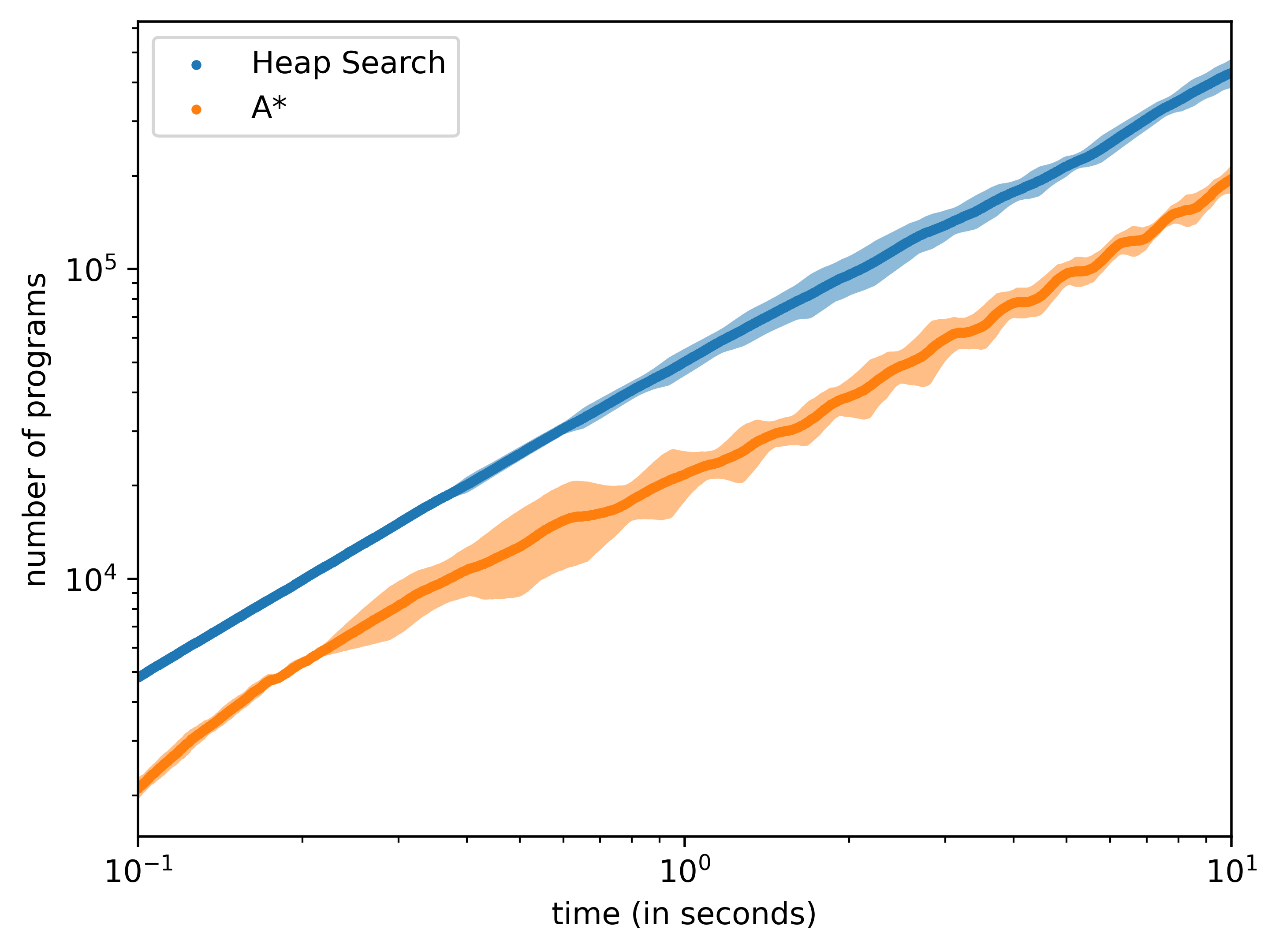}
  \caption{Comparing \textsc{Heap Search} and $A^*$}
  \label{fig:comparison_heap_search_a_star}
\end{figure}

We then turn to parallel implementation and perform the same experiments using a variable number of CPUs for \textsc{Heap Search} and \textsc{SQRT Sampling} using the grammar splitter. 
We do not report on a baseline parallel implementation of \textsc{SQRT Sampling} which would simply sample using the same PCFG on multiple CPUs. Indeed this naive approach performs poorly in comparison, since thanks to the grammar splitter two CPUs cannot generate the same program.

The results are shown in Figure~\ref{fig:parallel}, where we count programs with repetitions.
We see that for \textsc{SQRT Sampling} the scale-up is linear, and it is mostly linear for \textsc{Heap Search} with an acceleration from the 0.2s mark.
This acceleration can be explained in two ways: 
first, each sub-PCFG is shallower since it is split thus it is faster to enumerate program from it,
second, once all the successors have been computed \textsc{Heap Search} is a simple lookup table.
At the end of the experiment, \textsc{SQRT Sampling} has generated 2.8 times more programs with 6 CPUs than with 2 CPUs, 
whereas \textsc{Heap Search} has generated 7.6 times more programs with 6 CPUs than with 2 CPUs.

This experiment suggests that the grammar splitter enables us to scale our search on multiple CPUs with a linear speed up in the number of CPUs.

\begin{figure}
  \centering
  \includegraphics[width=0.99\columnwidth]{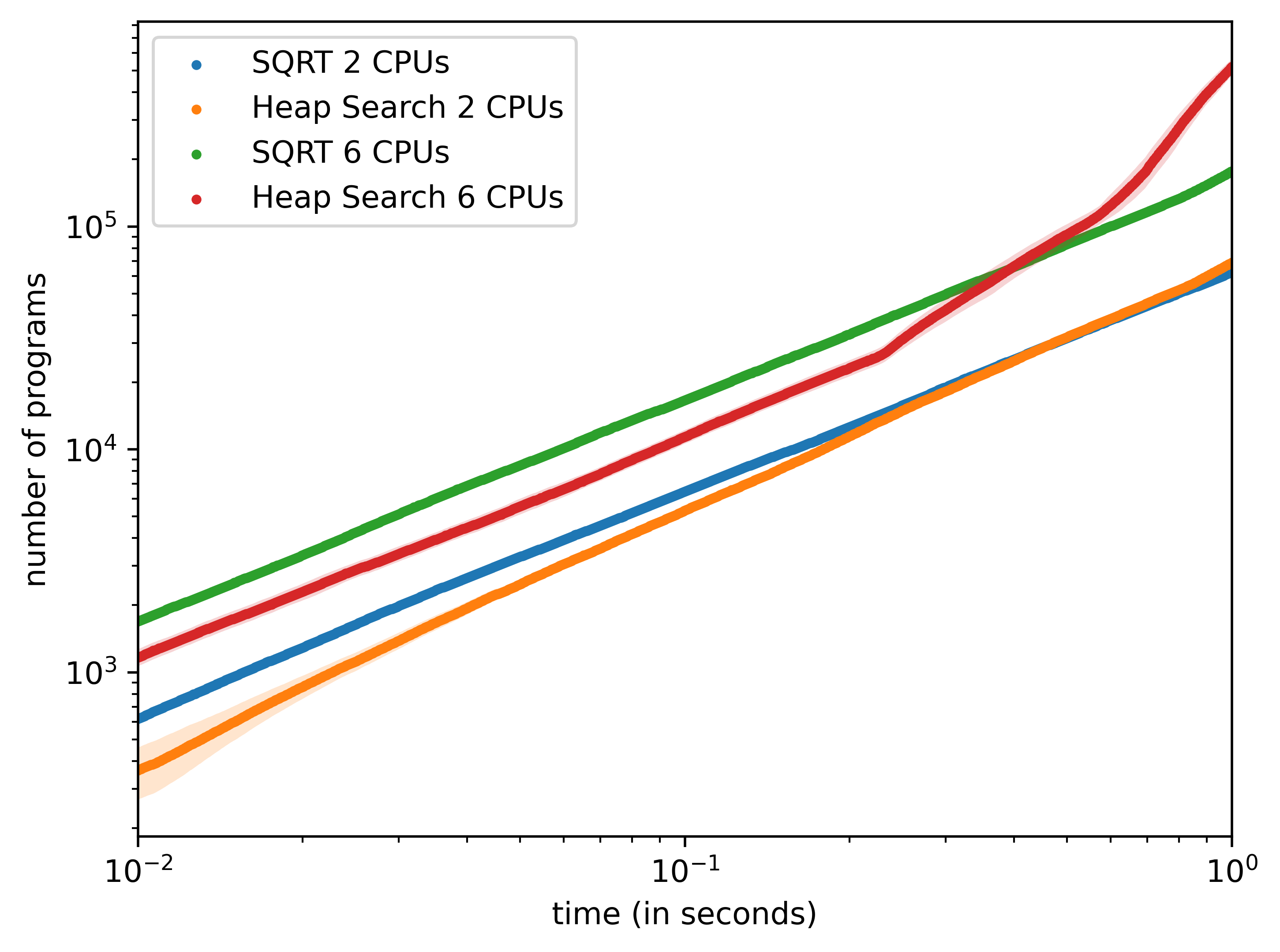}
  \caption{Parallel implementations of \textsc{Heap Search} and \textsc{SQRT Sampling} using the grammar splitter}
  \label{fig:parallel}
\end{figure}

\subsection{Machine-learned PCFGs}

In this second set of experiments we consider the benchmark suites of hand-picked problems and sets of I/O.
We extracted 218 problems from DreamCoder's dataset~\citep{EllisWNSMHCST21}.
(The experiments can be easily replicated on DeepCoder's dataset~\citep{Balog2017} but we do not report on the results here.)

We train a neural network to make predictions from a set of I/O.
Our neural network is composed of a one layer GRU~\cite{cho-etal-2014-learning} and a 3-layer MLP with sigmoid activation functions,
and trained on synthetic data generated from the DSL.
The details of the architecture and the training can be found in Appendix~\ref{sec:appendix_experiments}.
Our network architecture induces some restrictions, for instance the types of the programs must be \verb+int list -> int list+;
we removed tasks that did not fit our restrictions and obtained a filtered dataset of 137 tasks.
For each task we run every search algorithm on the PCFG induced by the neural predictions with a timeout of $100$s and a maximum of $1M$ programs. Unlike in the previous experiments the initialization costs of algorithms are not ignored.

\begin{figure}[th]
  \centering
  \includegraphics[width=0.99\columnwidth]{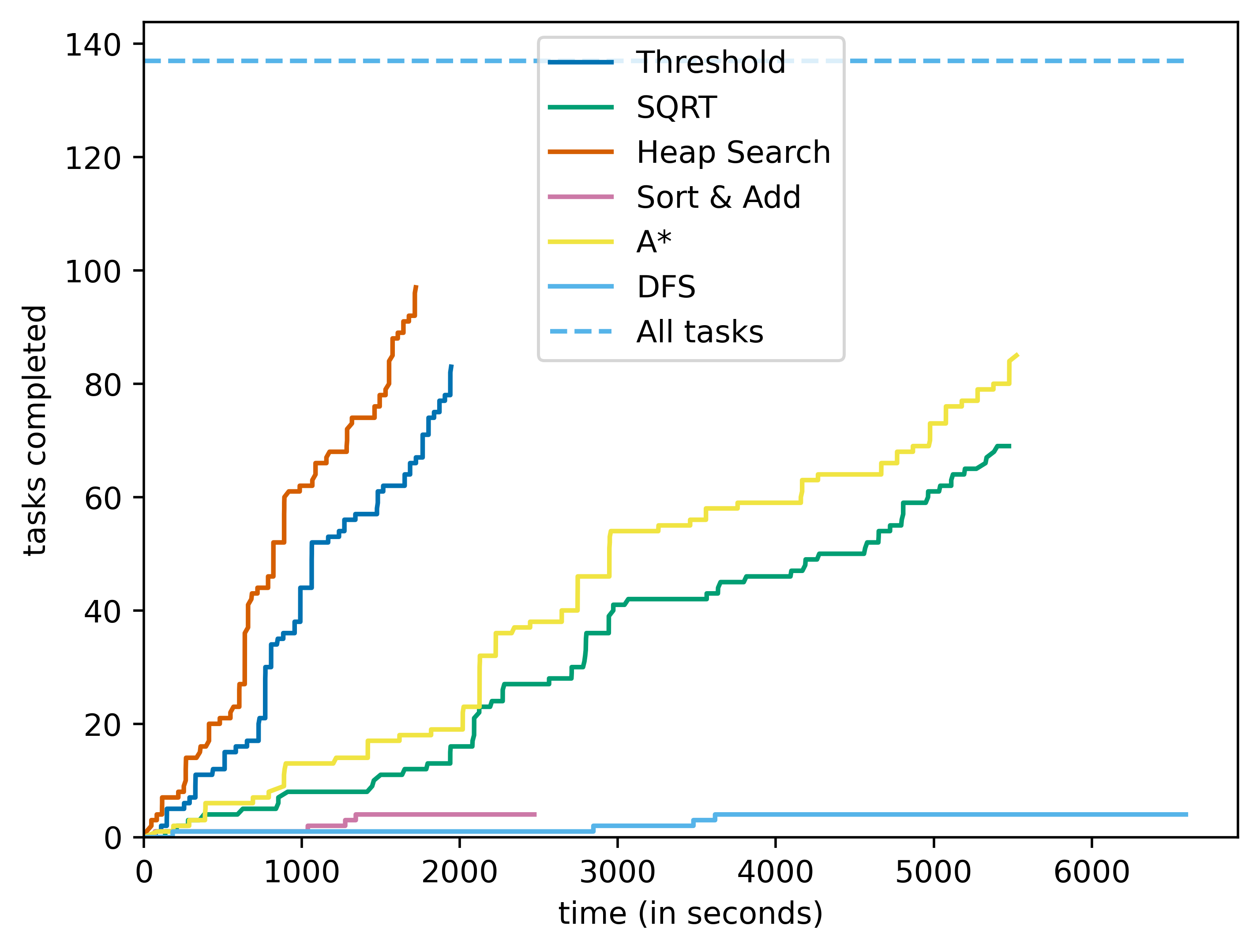}
  \caption{Comparing all search algorithms on the DreamCoder reduced dataset with machine-learned PCFGs}
  \label{fig:machine_learned_dreamcoder}
\end{figure}

Figure~\ref{fig:machine_learned_dreamcoder} shows the number of tasks solved within a time budget.
\textsc{Heap Search} solves the largest number of tasks for any time budget, and in particular 97 tasks out of 137 before timeout.
The comparison between \textsc{Threshold} and $A^*$ is insightful: 
$A^*$ solves a bit more tasks than \textsc{Threshold} (85 vs 83) but in twice the time.
\textsc{SQRT Sampling} performs just a bit worse than $A^*$ despite being a sampling algorithm.

Table~\ref{tab:number_programs} shows for each algorithm how many programs were generated per second.
Recall that \textsc{Heap Search} and $A^*$ generate the same programs in the same order.
Overall in these experiments \textsc{Heap Search} is $6$ times faster than~$A^*$.

Since \textsc{Heap Search} follows a bottom-up approach we save on program evaluations in two ways:
partial programs are evaluated along the search, and the results are cached.
On the other hand $A^*$ is a top-down enumeration method so every new program has to be evaluated from scratch.

It is interesting to compare the rates of \textsc{SQRT Sampling} and $A^*$: 
although \textsc{SQRT Sampling} generates over two times more programs, their overall performances are similar.
This can be explained in two ways: \textsc{SQRT Sampling} may sample the same programs many times, 
while $A^*$ enumerates each program once and starts with the most promising ones according to the predictions.

\begin{table}
  \caption{Number of programs generated}
  \label{tab:number_programs}
  \centering
  \begin{tabular}{cc}
    \toprule
    Algorithm & Number of programs generated \\
    \midrule
    \textsc{Heap Search}       & 38735 prog/s\\
    \textsc{Threshold}     		 & 25381 prog/s\\
    \textsc{DFS}               & 20281  prog/s\\
    \textsc{SQRT Sampling}     & 14020 prog/s\\
    $A^*$                      & 6071  prog/s\\
    \bottomrule
  \end{tabular}
\end{table}

Finally, we want to see how \textsc{Heap Search} improves with the quality of the predictions. According to the properties of \textsc{Heap Search}, we expect that the better the predictions the faster tasks are solved since \textsc{Heap Search} is loss optimal.
In order to show this, we ran \textsc{Heap Search} on the reduced DreamCoder dataset with a timeout of 5 minutes and kept the tasks where a solution was found since some tasks cannot be solved with our DSL.
Then we trained a neural network on this new reduced set with the solutions found. 
At different epochs of the training we checked how fast and how many tasks \textsc{Heap Search} could solve with the predictions of the network with a timeout of 30 seconds and a limit of 1M programs.
We plot on Figure~\ref{fig:heapsearch_evol} the number of tasks solved with respect to total time used by \textsc{Heap Search} after different number of training epochs with the uniform PCFG as a baseline.
We clearly observe the expected outcome, as the number of training epochs grows and the neural networks learn to better predict the solutions, \textsc{Heap Search} dramatically decreases the time required to solve the tasks. While this may be true for every algorithm, loss optimal algorithms like \textsc{Heap Search} benefit the most from it.
  
\begin{figure}[th]
  \centering
  \includegraphics[width=0.99\columnwidth]{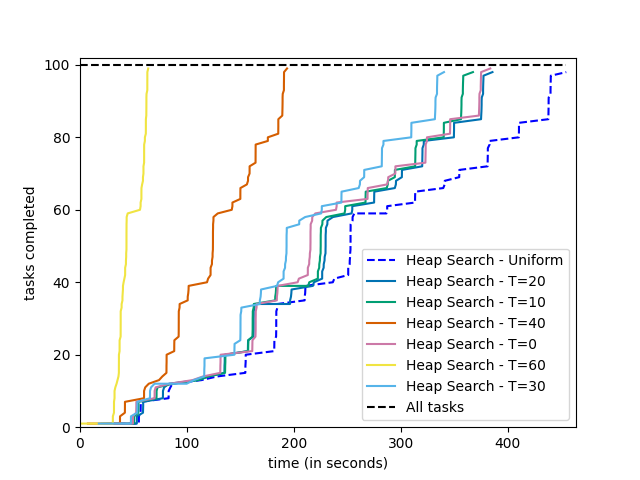}
  \caption{Evolution of time and tasks solved by \textsc{Heap Search} at different epochs of the training of a machine-learning PCFG predictor on a reduced DreamCoder dataset}
  \label{fig:heapsearch_evol}
\end{figure}

\section{Discussion}

\subsection{Related work}

The idea of guiding program search via probabilities is an old one~\cite{Solomonoff1989ASF} but which recently has fast become a standard practice in the AI and program synthesis communities.
To the best of our knowledge, practical program synthesizers that learn to use probabilistic predictions originated in~\citep{Menon2013},
which was first extended with deep learning in~\citep{Balog2017},
and such methods are now winning program synthesis competitions~\cite{Lee2018}.
To first approximation, such recent progress has drawn on advances in neural network architecture: e.g., early learned FlashFill-like systems~\cite{ParisottoMSLZK16} benefited from sophisticated attention mechanisms~\cite{Devlin2017},
and procedural planning programs~\cite{BunelHDSK18} benefited from feeding execution traces into the neural net~\cite{chen2018execution}.
While prior works have explored novel test-time strategies, from Sequential Monte Carlo~\cite{Ellis2019WriteEA} to ensembling methods~\cite{chen2018execution},
here we sought a systematic empirical/theoretical study of two different families of inference strategies, 
which we intend to mesh well with the larger body of work on neural program synthesis.
While the algorithms introduced here do not straightforwardly extend to neural autoregressive models (e.g. RobustFill~\cite{Devlin2017}),
methods such as \textsc{SQRT Sampling} in principle apply to this setting too.
We hope that our work here spurs the development of the right tricks to get the theoretical benefits of \textsc{SQRT Sampling} for these more flexible model classes, just as DeepCoder paved the way for RobustFill. 
Many extensions:~\cite{Lee2018,Zhang2018,Polosukhin2018,Kalyan2018,ZoharW18}. 

\citet{shi2020uniquerandomizer} introduced Unique Randomizer in order to sample without replacement:
it is a general technique effectively turning a sampling method into an enumerative one 
by updating the probabilistic weights along the search.
It is further improved through batching via Stochastic Beam Search~\cite{kool2019stochastic}.
It is possible to combine the \textsc{SQRT Sampling} algorithm with the Unique Randomizer and Stochastic Beam Search.
Our experiments did not yield interesting results in that direction, 
possibly because of memory allocation issues.
We leave for future work to optimise this approach.

\subsection{Contributions and Outlook}

Learning to synthesize programs is a canonical neural-symbolic learning problem:
training high capacity statistical models to guide the construction of richly structured combinatorial objects, such as programs.
Yet while the neural side of this problem has received much deserved attention, the symbolic component is sometimes taken for granted--after all, symbolic program synthesis has received decades of attention.
But the entrance of powerful neural networks for synthesizing programs forces us to reconsider how we deploy symbolic methods for program synthesis.
We have considered both systematic and stochastic methods, from both theoretical angles (obtaining guarantees) and also engineering perspectives (such as how to parallelize our new techniques).
We hope this work helps contribute to thinking through the symbolic search back-end in this more modern context.

\bibliography{bib}

\newpage
\appendix

\section{Machine-learned program synthesis}
\label{sec:appendix_framework}

\subsection{Program representation as trees}
\label{subsec:program_representation}

Programs are internally represented as trees as illustrated on the lhs of Figure~\ref{fig:tree_string}.
The program described by this tree is \verb+(take (head (rev l1)) (rev l2))+. 
We enforce a simple type system for programs consisting of basic types such as \verb+int, bool, string, float+
and two constructors: \verb+list+ and \verb+arrow+.
Each primitive in the DSL comes with a possibly polymorphic type, for instance \verb+take+ has the type 
\verb+take: int -> t0 list -> t0 list+, where \verb+t0+ can be instantiated to any type.
We only construct correctly typed programs using classical techniques from programming languages such as 
type inference and de Bruijn's indices for variables.

\begin{figure}[h]
  \centering
  \includegraphics[width=0.8\columnwidth]{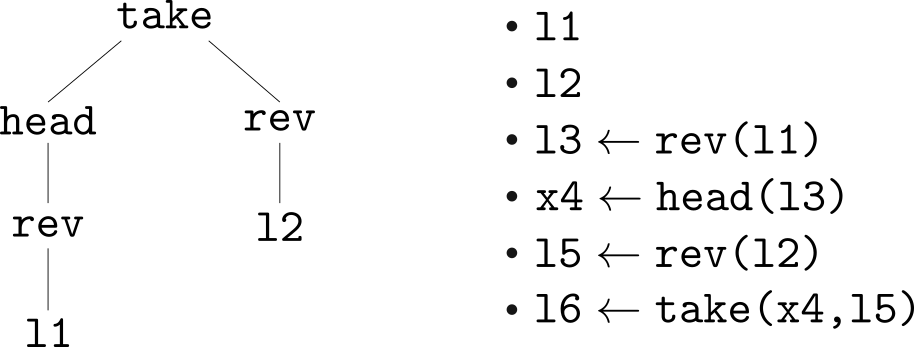}
  \caption{Tree and straight line representation of the same program.}
  \label{fig:tree_string}
\end{figure}

\subsection{Compilation phase: from domain specific language to grammar}
\label{subsec:compilation}

A domain specific language (DSL) is a set of primitives together with their type and semantics.
The compilation phase turns a DSL and a set of syntactic constraints into a CFG which generates the set of all correctly typed programs by following typing rules.
Syntactic constraints can be very diverse, they include maximum program depth, type request for the program, and other invariants to be satisfied.
The type request induces the initial non-terminal of the CFG:
for instance if the type request is 
\verb+int -> int list -> int list+
then the initial non-terminal carries the information that the program should output an \verb+int list+,
and that we can use one variable of type \verb+int+ and another one of type \verb+int list+.

Polymorphic types can be accommodated in two different ways: at compilation time or at generation time.
DreamCoder's implementation~\cite{EllisWNSMHCST21} 
uses polymorphic types in the CFGs, and performs type inference when generating programs.
In our tool we instantiate types during the compilation into CFGs, implying that they do not contain polymorphic types.
So when adding the primitive \verb+take: int -> t0 list -> t0 list+
we create a number of primitives for each type of bounded size, for instance

\verb+take[int]: int -> int list -> int list+.

This makes the CFGs larger, but considerably reduces the computation time for generating programs
by avoiding type inference algorithms.

The compilation phase consists in expanding typing rules as long as syntactic constraints are satisfied.
The CFGs are trimmed to keep their sizes manageable, removing non-productive and non-reachable non-terminals.

\section{Review of enumerative methods}
\label{sec:appendix_enumerative}

We describe the different search algorithms constructed in~\cite{Menon2013,Balog2017,Feng2018,Zhang2018},
which can be understood as ways of exploring the tree of leftmost derivations of the grammar.

\paragraph*{BFS and DFS for CFGs.}
The first algorithms we mention do not make use of the weights in the PCFG, they are enumeration algorithms for CFGs.
We consider the tree of all leftmost partial derivations starting from the initial non-terminal; each node of this tree is a partial program.
There are two classical algorithms for searching this tree. Both maintain a tree of partial derivations.
To resolve ambiguity, for each non-terminal we fix an (arbitrary) order on the derivation rules from this non-terminal.
The breadth-first search (BFS) expands the \emph{highest} node containing a non-terminal by applying the next derivation rule to the leftmost non-terminal in this node (next with respect to the order on derivation rules from this non-terminal).
The depth-first search (DFS) instead expands the \emph{lowest} node containing a non-terminal.
The DFS may not terminate, we need to introduce a stopping criterion; 
the most natural is the size of the program: we fix a target size $n$ and before applying a derivation rule we check whether the partial program has size at most $n$. With this stopping criterion the DFS enumerates all programs of size at most $n$.

The remaining algorithms take into account the probabilities attached to the PCFG.

\paragraph*{The sort and add search~\cite{Balog2017}.}
The algorithm is based on two ideas: the first idea is called `biassing'; we specify the order on the derivation rules for each non-terminal,
naturally by non-increasing order of probabilities, 
and the second idea is an incremental search: we fix a number $k$, restrict for each non-terminal to the $k$ most likely derivation rules,
runs the search on this restricted grammar, and iterates with a larger $k$ in case the program was not found.
The choice of the sequence of values for $k$ is both crucial for the performances and very hard to make.
In our experiments we use arithmetic progressions.

\paragraph*{The maximal probability from a non-terminal.}
The following subroutine was introduced in~\cite{Menon2013} and will be used by all subsequent algorithms.
For each non-terminal $T$, we write $m_T$ for the most likely program generated from $T$, and $p_T$ for its probability.
Computing $m_T$ and $p_T$ is done using dynamic programming:
$p_T$ is the maximum of $p \cdot \prod_{i \in [1,k]} p_{T_i}$ over all derivation rules $T \to^p f(T_1,\dots,T_k)$.

\paragraph*{The threshold algorithm~\cite{Menon2013}.}
The algorithm fixes a threshold, enumerates all programs with probability beating the threshold, and iterates with a lower threshold.
Enumerating all programs beating a fixed threshold is achieved using a DFS with the following stopping criterion:
before expanding a partial program we check whether it may be completed into a program with probability beating the threshold
by replacing each non-terminal $T$ by the probability of the most likely program $p_T$
and checking whether that program has high enough probability.
In our experiments we use geometric progressions for the sequence of thresholds.

\paragraph*{The beam search~\cite{Zhang2018}.}
Beam search is a BFS algorithm with bounded memory: the size of the queue of partial program,
called the beam, is restricted to a constant, the beam width. 

\section{Heap Search}
\label{sec:appendix_heap_search}

The pseudocode is given in Algorithm~\ref{algo:heap_search}.
\begin{algorithm*}
  \caption{Heap search}
  \label{algo:heap_search}
  \begin{algorithmic}[1]
    \Procedure{Initialization}{ }
    \ForAll{non-terminal symbols $T$} 
    \State Create an empty max heap $\heap_T$
    \State Create an empty hash table $\succ_T$
    \State Create an empty set $\seen_T$
    \ForAll{derivation rules $T \to f(T_1,\dots,T_k)$ }
    \State Add $f(m_1, \dots, m_k)$ to $\heap_T$ with priority $\D(f(m_1, \dots, m_k))$
    \State Add $f(m_1, \dots, m_k)$ to $\seen_T$
    \EndFor
    \EndFor
    \EndProcedure
    \State
    \Procedure{Query}{$T$,$x$}
    \If {$x$ is a key of $\succ_T$}
    \State Return $\succ_T[x]$
    \Else
    \State $x' \leftarrow pop(\heap_T)$ \algorithmiccomment{$x'$ is the successor of $x$}
    \State $\succ_T[x] \leftarrow x'$ \algorithmiccomment{updating the data structure}
    \State{(Assumes $x' = f(x_1,\dots,x_k)$ is generated by $T \to f(T_1,\dots,T_k)$)}
    \ForAll {$i \in [1,k]$} \algorithmiccomment{add all potential successors}
    \State $y_i = \query(T_i,x_i)$
    \State $x'_i = f(x_1,\dots, x_{i-1}, y_i, x_{i+1}, \dots, x_k)$
    \If {$x'_i$ is not in $\seen_T$}
    \State Add $x'_i$ to $\heap_T$ with priority $\D(x'_i)$
    \State Add $x'_i$ to $\seen_T$
    \EndIf
    \EndFor
    \State Return $x'$
    \EndIf
    \EndProcedure
\end{algorithmic}
\end{algorithm*}

\subsection{Correctness proof}

The following lemma gives the correctness of the heap search algorithm.
\begin{lemma}
  For any non-terminal $T$ and program $x$ generated from $T$, if we
  have already run \query(T,y) for any program y preceding x (among
  those generated from T) then \query(T,x) outputs the successor of
  $x$ from $T$
\end{lemma}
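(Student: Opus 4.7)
My plan is to prove the lemma by strong induction on the rank of $x$ in $T$, carried out simultaneously over all non-terminals $T$. Let $x_1, x_2, \ldots$ denote the programs generated from $T$ listed in non-increasing order of probability, so $x = x_n$ for some $n$. I would strengthen the statement into an invariant: at the moment $\query(T, x_n)$ is invoked (assuming every predecessor has already been queried), the successor $x_{n+1}$ lies in $\heap_T$, and no program of rank at most $n$ remains in $\heap_T$. Given this invariant, $\query(T, x_n)$ either returns the cached value $\succ_T[x_n]$, or it pops $\heap_T$; in the latter case the popped element must be $x_{n+1}$, because $x_{n+1}$ is in the heap and no heap element has strictly larger probability.

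The central structural step is to show $x_{n+1} \in \heap_T$. Write $x_{n+1} = f(u_1, \ldots, u_k)$ using some derivation rule $T \to f(T_1, \ldots, T_k)$. If every $u_j$ equals $m_{T_j}$, then $x_{n+1} = f(m_{T_1}, \ldots, m_{T_k})$ was placed in $\heap_T$ during initialization. Otherwise, pick an index $i$ with $u_i \ne m_{T_i}$, let $v_i$ be the predecessor of $u_i$ in $T_i$, and consider $y := f(u_1, \ldots, u_{i-1}, v_i, u_{i+1}, \ldots, u_k)$. Since the probability of $v_i$ in $T_i$ exceeds that of $u_i$, we have $\D(y) > \D(x_{n+1})$, so $y = x_j$ for some $j \le n$. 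At the moment when $x_j$ was popped (during the execution of $\query(T, x_{j-1})$), the algorithm evaluated $\query(T_i, v_i)$; by the induction hypothesis this call returns the successor of $v_i$ in $T_i$, which is exactly $u_i$, and hence the candidate $f(u_1, \ldots, u_{i-1}, u_i, u_{i+1}, \ldots, u_k) = x_{n+1}$ was inserted into $\heap_T$ (or had been inserted earlier, blocked as a duplicate by $\seen_T$ but still present).

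The companion claim that $\heap_T$ holds no program of rank at most $n$ follows from applying the induction hypothesis in turn to ranks $1, 2, \ldots, n-1$: the earlier pops removed $x_2, \ldots, x_n$ from $\heap_T$, and $\seen_T$ rules out reinsertion. The main obstacle I anticipate is justifying the application of the induction hypothesis to the nested call $\query(T_i, v_i)$, since this requires that every predecessor of $v_i$ in $T_i$ has already been queried at $T_i$ by the time this call occurs. I would address this by strengthening the induction to a simultaneous invariant across non-terminals: whenever $\query(T', z)$ is entered, all predecessors of $z$ in $T'$ have been queried previously. Propagating this through the recursive call pattern uses the same predecessor-chasing construction at each level of the grammar, and closes the induction.
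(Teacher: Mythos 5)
Your proof is correct and takes essentially the same route as the paper's: the paper phrases the argument as a minimal counterexample (the first failing query) and splits into the same two cases you identify --- the popped element cannot have higher probability than the true successor because $\seen_T$ ensures each program is pushed only once, and the true successor must already be in $\heap_T$ by the same predecessor-substitution argument (replace one argument $u_i$ by its predecessor $v_i$ to obtain a higher-probability program whose earlier processing inserted $x_{n+1}$). The cross-non-terminal issue you flag as the main obstacle is handled in the paper by taking the first failure globally over all pairs $(T,x)$, which is the minimal-counterexample counterpart of your simultaneous invariant.
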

\begin{proof}
  Suppose the lemma is not true and consider the first time where it
  fails, on say, input $(T, x)$. At step 14, $(T, x)$ was not queried
  before and therefore the algorithm branches out to line 16. Saying
  that the algorithm fails is equivalent to say that line 17 fails,
  meaning that $x' = pop(\heap_T)$ is not the successor of $x$ from
  $T$. Let us denote by $y$ the correct successor. There are two cases
  which can lead the algorithm the failure:
  \begin{itemize}
  \item \textbf{First case: }$x'$ is a program with higher
    probability than $y$. In this case, it means that $x'$ was already
    the output of a previous query (because $(T,x)$ is the first query
    for which the algorithm fails, so programs with higher
    probability than $y$ have already been output before). Thus the
    program $x'$ was pop at least two times from $\heap_T$, one time
    when $x'$ was the correct output of a query, and another time when
    the program failed: this is not possible since we push programs
    only once into the heaps thanks to the condition at line 22.
  \item \textbf{Second case: } $x'$ has a smaller probability than the
    probability of $y$. In this case, it means that $y$ was not in
    $\heap_T$ when the algorithm performed a pop at line 17 (otherwise
    $y$ would have pop since $y$ has higher probability than
    $x'$). Suppose $y = f(x_1, \dots, x_k)$, generated by the
    derivation rule $T \to f(T_1,\dots,T_k)$. If all $x_i$ are maximal
    (meaning that they don't have a predecessor from $T_i$) then
    $f(x_1,\dots, x_k)$ is pushed in $\heap_T$ during the
    initialization procedure (line 6,7) so the algorithm cannot fail
    because of this case. Therefore there is at least one $x_i$, say
    w.l.o.g $x_1$, which has a predecessor $x'_1$ from $T_1$. Consider
    the program $f(x'_1,x_2,\dots, x_k)$; this program has higher
    probability than $y$ and therefore has been seen before. Thus
    $f(\query(T_1,x'_1), x_2, \dots, x_k)$ was added to $\heap_T$
    because of line 23. To conclude, observe that
    $f(\query(T_1,x'_1), x_2, \dots, x_k) = f(x_1, x_2, \dots, x_k))$
    so $y$ has previously been added to $\heap_T$.
  \end{itemize}
\end{proof}

\subsection{Complexity analysis}

\begin{lemma}
  Fix any non-terminal $T$. Suppose that we have already generated the
  first i programs generated from $T$ (meaning that if $x$ is the
  $j$-th program for $j< i$, then $x$ is a key of the hash table
  $\succ_T$ and $\succ_T[x]$ is the $(j+1)$-th program generated from
  $T$). Then querying the successor of the $i$-th program has a running
  time of $O(\log i)$.
\end{lemma}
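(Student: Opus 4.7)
The plan is to bound the cost of the $(i+1)$-th call to $\query(T, \cdot)$ by separately accounting for heap operations on $\heap_T$, hash-table operations on $\succ_T$ and $\seen_T$, and the recursive sub-calls $\query(T_j, x_j)$ performed after a pop.

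First, I would bound the size of $\heap_T$. Initialisation pushes one candidate per production rule at $T$, which is a constant. Each subsequent call to $\query(T, \cdot)$ pops one element and pushes at most $k$ new candidates, where $k$ is the maximum arity of a production in the DSL. Hence $|\heap_T| = O(i)$ after the first $i$ queries at $T$, so the single pop and the up to $k$ pushes each cost $O(\log i)$; the hash-table lookups and insertions (for $\succ_T$ and $\seen_T$) contribute only $O(1)$ amortised.

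Second, I would handle the recursive calls $y_j = \query(T_j, x_j)$. If $x_j$ is already a key of $\succ_{T_j}$ the call returns in $O(1)$. Otherwise it itself triggers a fresh pop-and-propagate step at $T_j$. The key structural observation is that such a non-cached recursive call is on a strict subprogram of $x'$, so the chain of non-cached recursive descents has length bounded by the maximum program depth $D$ imposed by the CFG, which is a constant. Moreover, a counting argument (each query at $T$ produces at most $k$ sub-queries, so there are at most $k^D i$ sub-queries at descendant non-terminals after $i$ queries at $T$) shows that every heap $\heap_{T_j}$ encountered along the recursion also has size $O(i)$.

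Third, I would combine these observations by induction on the recursion depth $d$. Writing $\tau(d, i)$ for the cost of a non-cached call at depth $d$ against heaps of size $O(i)$, we have
\[
\tau(d, i) \le c \log i + k \cdot \tau(d-1, O(i)), \qquad \tau(0, i) = O(\log i),
\]
which, since $k$ and $D$ are constants fixed by the CFG, unrolls to $\tau(D, i) = O(\log i)$.

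The main obstacle is taming the recursive sub-calls: one must show both that the chain of non-cached descents has bounded length (using the bounded program depth of the CFG) and that the heaps visited along the way have size $O(i)$. Once these two structural bounds are in place, the $O(\log i)$ bound follows by a routine unrolling of the recurrence; everything else is standard heap/hash-table accounting.
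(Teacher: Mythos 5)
Your overall accounting (heap of size $O(i)$ at $T$, hence $O(\log i)$ per pop/push, plus a constant number of expensive recursive calls) reaches the right conclusion, but you take a genuinely different route from the paper, and one step of your route is justified incorrectly. The paper does not bound the recursion by depth at all: it observes that a recursive call $\query(T',x)$ is expensive (i.e., reaches the pop-and-propagate branch) only when $x$ is not yet a key of $\succ_{T'}$, which can hold for at most one program per non-terminal $T'$ — the last program from $T'$ seen so far. Hence the whole cascade contains at most one expensive call per non-terminal of the grammar, and the total cost is $\sum_{T'} O(\log |\heap_{T'}|) = O(\log i)$. This argument is independent of program depth and arity, gives a bound linear in the number of non-terminals rather than your $k^D$, and does not need the grammar to be acyclic.

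The flawed step in your version is the claim that ``such a non-cached recursive call is on a strict subprogram of $x'$, so the chain of non-cached recursive descents has length bounded by the maximum program depth $D$.'' That is only true at the first level of recursion: $\query(T_j,x_j)$ is indeed called on a subprogram $x_j$ of the popped program $x'$, but if that call is itself a cache miss it pops the \emph{successor} $y_j$ of $x_j$ from $T_j$ and recurses on subprograms of $y_j$ — and $y_j$ is a different program, not a subprogram of $x'$. So the ``strict subprogram'' chain breaks after one level and cannot bound the recursion depth. What actually terminates the recursion in your style of argument is that each level of recursion moves from a non-terminal to one of its children in the grammar, so the depth is bounded by the depth of the (acyclic, depth-bounded) compiled CFG; you should either say that explicitly, or — better — replace the depth argument by the paper's at-most-one-miss-per-non-terminal observation, which also repairs the case of grammars that are not depth-stratified. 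With that repair your recurrence unrolls as you describe and the $O(\log i)$ bound follows.
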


\begin{proof}
  First observe that a query can call recursively several others
  queries. However, for any non-terminal symbol $T$ there is at most
  one query of the form $\query(T,x)$ which leads the algorithm to
  branch out to line 16 and thus to possibly rise other
  queries. Indeed, this case happens only when the successor of $x$
  has not been computed yet (otherwise the query stops at line 15);
  this can happen for at most one program for any fixed symbol $T$:
  the last program from $T$ already seen in any execution of the
  algorithm. Forgetting about recursive queries, the running time of a
  query going through line 16 is given by the pop and push operations
  (line 17 and 23). The number of pops and pushs is at most $m+1$
  where $m$ is the maximal arity of a function in the
  grammar. Moreover, each such operation costs a running time of
  $O(\log |\heap_T|)$, so the total time for the query is
  $O(\log |\heap_T|)$.

  Overall, the total running time is bounded by

  $$ \sum_{T} O(\log |\heap_T|).$$
  To conclude, observe that when we query the successor of the i-th
  program generated from $T$, the size of $\heap_T$ is bounded by
  $m\cdot i$ since we push at most $m$ programs during any query not
  already computed before.
\end{proof}

\subsection{Program evaluation}
A key advantage of \textsc{Heap Search} is that it is bottom-up: 
partial programs are composed from the leaves to the root.
This implies that partial programs can be evaluated and their evaluations cached.
Although memory hungry, this optimisation leads to major gains when taking evaluation into account.

\section{SQRT Sampling}
\label{sec:appendix_SQRT_sampling}
\subsection{An example}
The simplest instantiation of this theorem is $X = \set{\textrm{Head},\textrm{Tail}}$ and the Bernoulli distribution $\D$ with parameter $p$:
it draws Head with probability $p$ and Tail with probability $1-p$.

A sampling algorithm $\D'$ is a Bernoulli distribution with parameter $p'$,
and its loss is
\[
\L(\D',\D) = \E_{x \sim \D} \left[ \frac{1}{\D'(x)} \right] = \frac{p}{p'} + \frac{1-p}{1-p'}.
\]
Minimising for $p'$ yields $p' = \frac{\sqrt{p}}{\sqrt{p} + \sqrt{1 - p}}$,
inducing the loss optimal sampling algorithm $\sqrt{\D}$.

\subsection{Implementation details}

The construction follows two steps: first construct a weighted CFG that recognises $\sqrt{\D}$, 
and then normalise it into a PCFG using~\cite{chi-1999-statistical}.
The normalisation requires computing the partition function $Z$ defined by 
\[
Z(S) = \sum_{P \text{ generated from } S} \D(P).
\]
In general the partition function can be computed by solving a system of polynomial equations.
This is easier in our case since we restrict ourselves to acyclic PCFGs.

\section{Parallel implementation}
\label{sec:appendix_parallel_implementation}
\subsection{Description and pseudocode}

The pseudocode of the grammar splitter is given in Algorithm~\ref{algo:grammar_splitter} 
using two procedures: split and find improving swap.
The split procedure describes at a high level how our grammar splitter works.
The find improving swap procedure is here to provide a clear method of finding an improving swap or refinement.

In our experiments, we initialized the splitting as follows: 
we split the node with highest probability until the total number of nodes is greater than the number of splits required, 
then assign one node to each split, and the remaining nodes to the last split.
We also limited the search of an improving swap or refinement to the most probable split and the least probable split unlike in the find improving swap procedure where all splits are considered.
Finally, in all of our experiments $\alpha_{desired}=1.05$.

\newcommand{\splits}{\textsc{Splits}}
\begin{algorithm*}
  \caption{Grammar splitter}\label{algo:grammar_splitter}
  \begin{algorithmic}[1]
   
    \Procedure{split}{$G$, $nsplits$, $\alpha_{desired}$}
    \State Create an initial splitting $\splits$
    \State $\alpha \leftarrow \frac{\max_{sG \in \splits}{\text{probability mass}(sG)}}{\min_{sG \in \splits}{\text{probability mass}(sG)}}$

    \While {$\alpha > \alpha_{desired}$}
      \If {an improving swap exists}
      \State Update $\splits$ with the improving swap
      \State $\alpha \leftarrow \frac{\max_{sG \in \splits}{\text{probability mass}(sG)}}{\min_{sG \in \splits}{\text{probability mass}(sG)}}$
      \Else
      \State Find the partial program with largest probability $P$
      \State Replace $P$ in its split with its children
      \EndIf
    \EndWhile

    \State Return $\splits$
    \EndProcedure

    \Procedure{Find improving swap}{$G$, $\splits$}
    \State $\alpha \leftarrow \frac{\max_{sG \in \splits}{\text{probability mass}(sG)}}{\min_{sG \in \splits}{\text{probability mass}(sG)}}$
    \State $\alpha^* \leftarrow \alpha$ \algorithmiccomment{best improving swap $\alpha$}
    \State $s \leftarrow$ None \algorithmiccomment{best improving swap}
    \State $L \leftarrow \text{argmax}_{sG \in \splits}\ \text{probability mass}(sG)$
    \State Sort $\splits$ by increasing $\text{probability mass}$
    \ForAll{$sG \in \splits \setminus \{L\}$}
      \ForAll{$P' \in L$}
        \ForAll{$P \in G$}
          \State $\beta \leftarrow$ Compute new $\alpha$ with Swap($L$, $sG$, $P$, $P'$)
          \If{$\beta < \alpha^*$}
            \State $\alpha^* \leftarrow \beta$ 
            \State $s \leftarrow$ Swap($L$, $sG$, $P$, $P'$)
          \EndIf
        \EndFor
        \State $\beta \leftarrow$ Compute new $\alpha$ with Gift($L$, $sG$, $P'$)
        \If{$\beta < \alpha^*$}
          \State $\alpha^* \leftarrow \beta$ 
          \State $s \leftarrow$ Gift($L$, $sG$, $P'$)
        \EndIf
      \EndFor
    \EndFor
    
    \State $l \leftarrow \text{argmin}_{sG \in \splits}\ \text{probability mass}(sG)$
    \State Sort $\splits$ by decreasing $\text{probability mass}$
    \ForAll{$sG \in \splits \setminus \{L, l\}$}
      \ForAll{$P \in sG$}
        \ForAll{$P' \in l$}
          \State $\beta \leftarrow$ Compute new $\alpha$ with Swap($l$, $sG$, $P$, $P'$)
          \If{$\beta < \alpha^*$}
            \State $\alpha^* \leftarrow \beta$ 
            \State $s \leftarrow$ Swap($l$, $sG$, $P$, $P'$)
          \EndIf
        \EndFor
          \State $\beta \leftarrow$ Compute new $\alpha$ with Gift($sG$, $l$, $P$)
          \If{$\beta < \alpha^*$}
            \State $\alpha^* \leftarrow \beta$ 
            \State $s \leftarrow$ Gift($sG$, $l$, $P$)
          \EndIf
      \EndFor
    \EndFor
    \State Return $s$
    \EndProcedure

\end{algorithmic}
\end{algorithm*}

\section{Details on the experiments}
\label{sec:appendix_experiments}

\subsection{Random PCFG search}

To turn the CFGs into PCFGs we sample a probabilistic labelling, meaning
a weight for each derivation rule such that the sum over all derivation rules from each non-terminal is one.
The distribution depends on a parameter $\alpha \in (0,1]$: 
the $i$\textsuperscript{th} weight is sampled uniformly at random in $[0,\alpha^i]$,
and eventually renormalised.
The smaller $\alpha$, the more biassed the distribution, 
implying that the search for programs will be faster since we have better hints on the target program. 
For $\alpha = 1$ the weights are sampled uniformly, resulting in a very unbiased PCFG, making the search more difficult.
In the random PCFGs experiment we use $\alpha = 0.7$.

\subsection{Machine-learned PCFG}

We only work with tasks of type \verb+int list -> int list+.
We remove examples where lists have length greater than $L_{max} = 10$ or where one of the elements of the input or output list are not in $L_{in} = \left[-30; 30\right]$.

We say that a task is solved if a program is found which satisfies all examples.
The timeout of $100$s only takes into account the search time and the evaluation times and not the time to query the neural network for predictions.

\subsubsection*{The neural network}

The neural network takes as input (the encoding of) a list of examples and outputs a probabilistic labelling for the CFG.

\paragraph{Input encoding}
Each list in the examples is encoded in a naive way by mapping each element of $L_{in}$ to $[0,60]$.
We additionally use two special symbols \verb+PAD+ and \verb+NOPAD+, leading to 
a fixed size encoding of a list into a vector of size $2 L_{max}$.
Hence an example is encoded as a tensor of shape $(n_{inputs}, 2 L_{max})$ where $n_{inputs}$ is the number of inputs in the example.

\paragraph{Embedding}
The encoded examples are fed into an embedding consisting of a single layer GRU~\cite{cho-etal-2014-learning} outputting a tensor of shape $(s_{GRU} \times 2L_{max})$.
In our experiments $s_{GRU}=10$.

\paragraph{Main layers}
The embedded examples are fed into an MLP with 3 layers.
The first two have output size $s_{MLP} = 64$ and the last layer outputs a tensor of dimension $k$
the number of derivation rules in the CFG $C$.
Assigning these (normalised) weights to the rules of the CFG yields a PCFG.

\subsubsection*{Training}

We optimised end-to-end the neural network with Adam~\cite{Kingma2015AdamAM} with default parameters and learning rate of $lr=0.001$.
We trained for one epoch with a batch size of $128$ on a generated dataset of $10,000$ problems.
To generate the dataset, programs are sampled from the uniform PCFG and inputs by choosing a length at most $L_{max}$
and elements uniformly at random in $L_{in}$.
If the output of such a generated input does not fall in the lexicon $L_{in}$ then the program is discarded.

We use the binary cross entropy loss between the output of the neural network and the encoding of a solution program: 
a rule in the CFG has probability 1 if it is used to derive the solution program, and 0 otherwise.

\nocite{BunelHDSK18,ClymoGPFM20}

\end{document}